\pdfoutput=1

\documentclass[11pt]{article}

\usepackage{EMNLP2022}
\usepackage{comment}
\usepackage{times}
\usepackage{latexsym}

\usepackage{amsfonts,booktabs}
\usepackage{bbold}
\usepackage[T1]{fontenc}
\usepackage[utf8]{inputenc}

\usepackage{microtype}

\usepackage{inconsolata}
\usepackage{graphicx,svg}

\usepackage{listings}
\usepackage{xcolor}

\definecolor{codegreen}{rgb}{0,0.6,0}
\definecolor{codegray}{rgb}{0.5,0.5,0.5}
\definecolor{codepurple}{rgb}{0.58,0,0.82}
\definecolor{backcolour}{rgb}{0.95,0.95,0.92}

\lstdefinestyle{mystyle}{
    backgroundcolor=\color{backcolour},   
    commentstyle=\color{codegreen},
    keywordstyle=\color{magenta},
    numberstyle=\tiny\color{codegray},
    stringstyle=\color{codepurple},
    basicstyle=\ttfamily\footnotesize,
    breakatwhitespace=false,         
    breaklines=true,                 
    captionpos=b,                    
    keepspaces=true,                 
    numbers=left,                    
    numbersep=5pt,                  
    showspaces=false,                
    showstringspaces=false,
    showtabs=false,                  
    tabsize=2
}

\lstset{style=mystyle}

\usepackage{dsfont}
\usepackage{amsmath, amssymb, amsfonts, amsthm}
\usepackage{mathtools,commath}
\newtheorem{theorem}{Theorem}

\newtheorem{lemma}[theorem]{Lemma}

\allowdisplaybreaks

%
%

\title{SEAL: Interactive Tool for Systematic Error Analysis and Labeling}


\author{Nazneen Rajani\textsuperscript{$\dagger$}, Weixin Liang\textsuperscript{$\ddagger$}, Lingjiao Chen\textsuperscript{$\ddagger$}, Meg Mitchell\textsuperscript{$\dagger$}, James Zou\textsuperscript{$\ddagger$} \\
\textsuperscript{$\dagger$} Hugging Face
\quad
 \textsuperscript{$\ddagger$} Department of Computer Science, Stanford University \\
  \texttt{\{nazneen, meg\}@huggingface.co \quad
  \{wxliang,lingjiao,jamesz\}@stanford.edu} \\
  }

\newcommand{\seal}{{\sc{SEAL}}}

\theoremstyle{definition}
\newtheorem{definition}{Definition}[section]

\begin{document}
\maketitle
\begin{abstract}
With the advent of Transformers, large language models (LLMs) have saturated well-known NLP benchmarks and leaderboards with high aggregate performance. However, many times these models systematically fail on tail data or rare groups not obvious in aggregate evaluation. Identifying such problematic data groups is even more challenging when there are no explicit labels (e.g., ethnicity, gender, etc.) and further compounded for NLP datasets due to the lack of visual features to characterize failure modes (e.g., Asian males, animals indoors, waterbirds on land etc.). This paper introduces an interactive Systematic Error Analysis and Labeling (\seal) tool that uses a two-step approach to first identify high error slices of data and then in the second step introduce methods to give human-understandable semantics to those under-performing slices. We explore a variety of methods for coming up with coherent semantics for the error groups using language models for semantic labeling and a text-to-image model for generating visual features. \seal~toolkit and demo screencast is available at \url{https://huggingface.co/spaces/nazneen/seal}.

\end{abstract}

\section{Introduction}
\begin{figure}[ht]
    \centering
    \includegraphics[width=\linewidth]{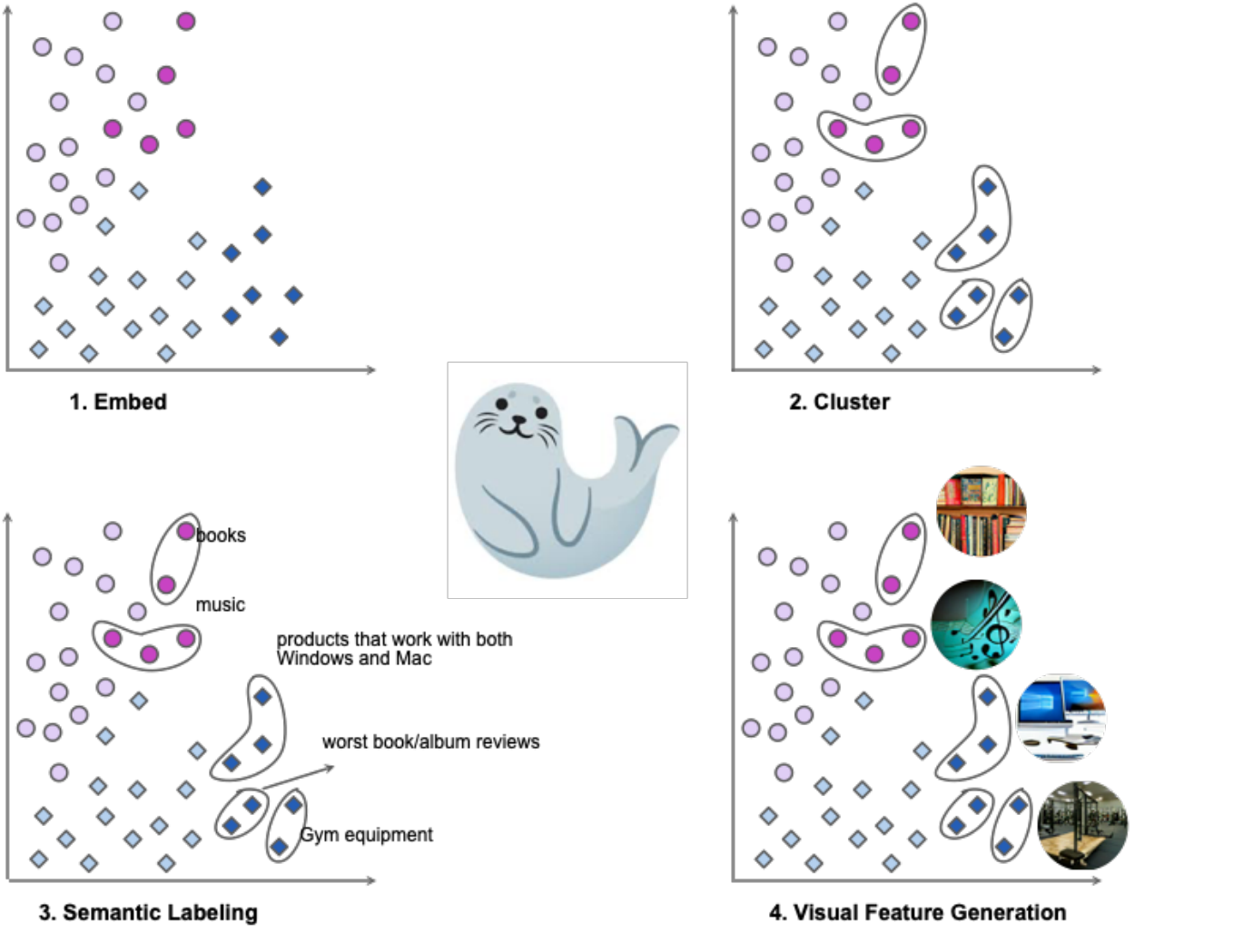}
    \caption{\seal~interactive tool for discovering systematic errors in model performance. Steps 1 and 2 include extracting the model embeddings and clustering datapoints with high-loss. Steps 3 and 4 include semantic labeling of error groups and generating visual features to support debugging.}
    \label{fig}
    \vspace{-10pt}
\end{figure}
Machine learning systems that seemingly perform well on average can still make \textit{systematic errors} on important subsets of data. Examples include such systems performing poorly for marginalized groups in chatbots~\citep{Chloe2018Microsoft}, recruiting tools~\citep{Isobel2018Amazon}, cloud products~\citep{Nicolas2020Google}, ad targeting~\citep{Karen2019Facebook}, credit services~\citep{Will2019Apple}, and image cropping~\citep{Isobel2020Twitter}. Discovering and labeling systematic errors in ML systems is an open research problem that would enable building robust models that generalize across subpopulations of data.

\begin{figure*}[t!]
    \centering
    \includegraphics[trim=0cm 0cm 2.5cm 0cm,, width=1.0\linewidth]{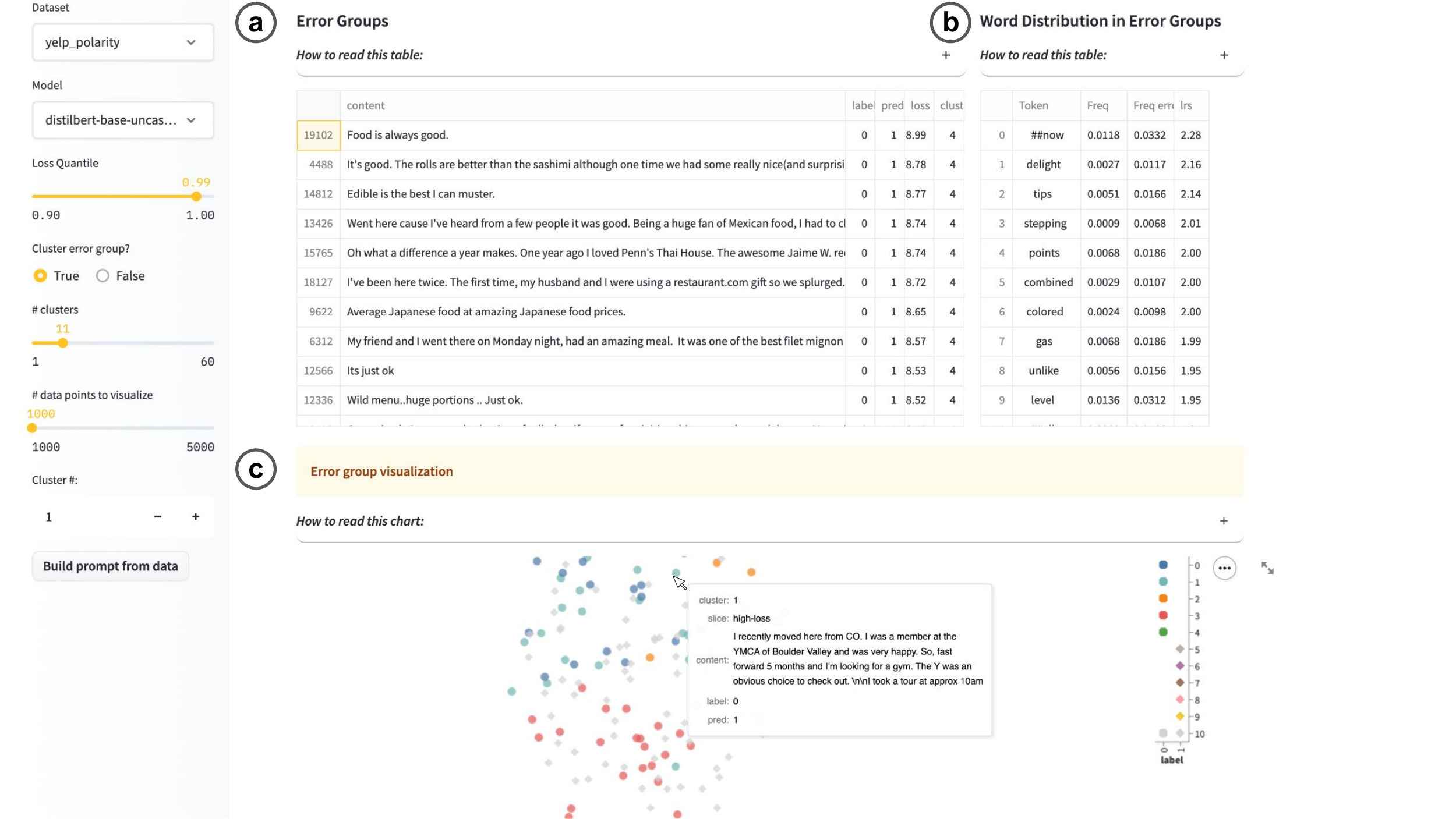}
    \caption{SEAL interface showing high-error groups for the distilbert-base-uncased model evaluated on the yelp\_polarity dataset. The interface comprises of various components: (a) examples from the dataset in the high error groups (sorted by loss), (b) statistics of tokens in high error groups relative to the entire evaluation set, (c) interactive 2d visualization of the model embeddings showing groups of errors in color and low-loss groups in gray. The colors indicate different error clusters. If the dataset has annotated classes, the visualization includes symbols to represents those classes ($\diamond$ and $\circ$ in the above figure). The panel on the left has multiple widgets that a user can control to be able to interactively understand their model's mispredictions relative to the rest of the model's outputs. Apart from the dataset and model, the user can select the loss quantile that want to examine for systematic errors, if they want SEAL to group those errors using kmeans++ with the number of clusters, and how many data points they want to visualize at a time in the visual component of the interface downsampled proportional to the group size (we use Altair for plotting that supports a maximum of 5000 data points to be visualized at once).}
    \label{seal-interface}
\end{figure*}

Uncovering underperforming groups of data of a ML system is not straightforward. Firstly, the high-dimensional space of the representations learned by the deep learning models makes it difficult to identify such groups of systematic errors. Secondly, it is difficult to extract and label the hidden semantic information in such groups with high errors without a human-in-the-loop setup. Identifying systematic model failures requires practitioners to think creatively about model evaluation~\citep{ribeiro-etal-2020-beyond, wu2019errudite, goel2021robustness, kiela-etal-2021-dynabench, Yuan_2022}. However, current approaches are mostly limited to examining and manipulating model mispredictions. The onus of identifying what group or subset of data to evaluate still falls on the practitioner, making it inefficient and prone to oversight. Recent works on fine-grained error analysis, such as Domino~\citep{eyuboglu2022domino} and Spotlight~\citep{d2022spotlight} provide solutions to this problem but focus on image datasets which are easier to visualize.

Error analysis for text data is less explored and more challenging. It also highlights the need to provide semantic summaries of text, which we tackle in SEAL. For example, NLP models could underperform on hundreds of possible input types -- longer inputs, inputs from non-native speaker, inputs with topic domains underrepresented in training, etc. This is a huge barrier of entry for most non-expert ML users who wish to gain a better understanding of their model and datasets with such existing tools. Model evaluation should ideally give actionable insights into a model's performance on a dataset in the form of data curation~\citep{liang2022metashift} or model patching~\citep{goel-etal-2021-goodwill}. 


Our desiderata is a tool that summarizes failures of a  model on textual data in a concise, coherent and human intepretable way. Systematic Error Analysis and Labeling (SEAL) is an interactive tool to 1. identify candidate groups of data with high systematic errors and 2. generate semantic labels for those groups. For 1, we use k-means++ on subset of evaluation data with highest loss. Semantic labeling uses LLMs (like GPT3) in zero-shot setting for identifying concepts or topics common to examples in the candidate group. We also explored using a text-to-image model to generate visual features for high error clusters using the Dall-e-mini~\citep{dalle}. Semantic descriptions (via labeling or visual features) of such systematic model errors not only enable practitioners to better understand the failure modes of their model during evaluation but also gives actionable insight to fix them via some form of model patching or data augmentation.


\section{SEAL}
We present Systematic Error Analysis and Labeling (\seal), an interactive visualization tool that provides rich data point comparison for text classification systems, enabling fine-grained understanding of model performance on data groups as shown in Figure~\ref{seal-interface}. It comes pre-loaded with model outputs
for most downloaded HuggingFace (HF) models and datasets, as well as scripts for loading data for any dataset provided by the Datasets API and extracting embeddings of any HF-compatible model.\footnote{Based on usage data from July'22 at \url{https://huggingface.co/models?pipeline_tag=text-classification&sort=downloads}}

\subsection{Error Discovery and Analysis}
Identifying model failures via error discovery is a crucial step in engineering robust systems that generalize to diverse subsets of data. SEAL uses the model's loss on a datapoint as a proxy for potential bugs or errors. Past work has examined model behavior on individual datapoints for mapping training datasets~\citep{swayamdipta-etal-2020-dataset}. We hope to leverage information about model behavior on individual \textit{evaluation} data-points in a similar fashion. We use quantiles for dividing the model loss region for further analysis. For example, Figure~\ref{seal-interface} shows the $0.99$ loss quantile for the distilbert-base-uncased model~\citep{Sanh2019DistilBERTAD} on the yelp\_polarity ~\citep{yelp} sentiment classification dataset.
The SEAL interface allows the user to control the loss quantile for fine-grained analysis using the widget on the side panel.

 SEAL uses k-means++ for clustering the high-loss candidate datapoints from the above step. \citet{meng2022topic} used k-means for topic discovery on the entire dataset and showed that the clusters are stable only when $k$ is very high ($k >> 100$) because of the scale of the embedding space. In contrast, SEAL only clusters the very high loss slice ($> 0.98$ quantile). 
 
 We use the representations of the models' final hidden layer (before the softmax) as embeddings. If the evaluation dataset selected by the user has ground truth annotations, then it groups the clusters by error-types (false-positives and false-negatives for binary classification). The visualization component of the SEAL interface shows the error clusters and their types using colors and symbols respectively. We use a standard heuristic of setting the number of clusters in k-means++ to be approximately $\sqrt{n/2}$, where $n$ is the group size.

\subsection{Semantic Error Labeling}
Semantic error labeling is important for identifying the underlying concept or topic connecting the datapoints in a error group. Systematic errors can be mathematically modeled and fixed by data curation. Contrast this with random errors that cannot be mathematically modeled or fixed via data curation. Past work analyzing NLP models have shown systematic errors on various tasks including sentiment classification, natural language inference, and reading comprehension ~\citep{mccoy-etal-2019-right, Kaushik2020Learning,jia-liang-2017-adversarial}. SEAL uses pretrained LLMs (such as GPT3~\citep{gpt} or Bloom~\citep{bloom}) for semantic labeling of error clusters that could highlight such possible systematic bugs in model performance. We craft a prompt consisting of  instruction and examples in the clusters extracted in the previous step as follows. 
\begin{lstlisting}[language=Python]
def build_prompt(content)
    instruction = 'In this task, we`ll assign a short and precise label to a group of documents based on the topics or concepts most relevant to these documents. The documents are all subsets of a ${task} dataset.'
    
    examples = '\n - '.join(content)
    
    prompt = instruction + '- ' + examples+ '\n Group label:'
    
    return prompt
\end{lstlisting}
Here \texttt{task} is the task under consideration for example `sentiment classification' in our case. The \texttt{arg} to the function is a dataframe or dataframe column with the dataset content as string that the model uses for classification.
Our prompt design was experimented first in the few-shot setting before adapting to the zero-shot.

\begin{table}[]
\small
\begin{tabular}{@{}lrr@{}}
\toprule
\textbf{Group label}&\textbf{Size}&\textbf{Group acc.}\\
\midrule
 \multicolumn{3}{c}{Albert Base v2 on Yelp (overall acc: 0.95)} \\ 
 \midrule
 Club reviews&574&0.90 (-5\%)\\
Movie theater reviews&231&0.85 (-10\%)\\
Dentist reviews&69&0.88 (-7\%)\\
Chain restaurant reviews&61&0.88 (-7\%)\\
Frozen custard reviews&37&0.83 (-12\%)\\
Waterfront business reviews&11&0.72 (-23\%)\\
 \midrule
\multicolumn{3}{c}{Distilbert Base Uncased on Amazon (overall acc: 0.89)} \\ 
\midrule
Bath product reviews&78&0.79 (-10\%)\\
Vaccuum cleaner reviews&34&0.76 (-13\%)\\
Eragon book reviews&28&0.67 (-22\%)\\
SD card reviews&13&0.61 (-28\%)\\
 \midrule
\multicolumn{3}{c}{Distillbert Base Uncased on IMDB (overall acc: 0.86)} \\ 
\midrule 
Reviews of movies starring `Bill'&644&0.79 (-7\%)\\
Adventure movie reviews&583&0.81 (-5\%)\\
Reviews of foreign films&262&0.80 (-6\%)\\
Movies with `stranger' in title&121&0.76 (-10\%)\\
Reviews of movies with pyschopaths&94&0.78 (-8\%)\\
Reviews of mystery movies&72&0.75 (-11\%)\\
\bottomrule
\end{tabular}
\caption{Results obtained from using SEAL on three sentiment classification datasets. The columns shows the group labels generated by GPT3, the size of the group in the overall evaluation set, and the group accuracy.}
\label{seal_results}
\end{table}

For the results and use case discussion in Section~\ref{results}, we use the OpenAI GPT3 API~\footnote{\url{https://beta.openai.com/playground}} via the CLI. The maximum token length is limited to $4000$ and so we truncate the prompt to that length before feeding the model.
We observed that for many larger groups of high-loss examples  ($>25$)~\seal~labels degenerate to generic output such as ``customer reviews of products'', ``movies reviews'', ``restaurant reviews'', etc. To prevent this and to generate coherent group labels, we sub-cluster the bigger error groups until their size is $<25$.  We verified the group labels by running the \citet{Blei03latentdirichlet} LDA topic model on the examples in each cluster after a pre-processing step. The pre-processing included tokenizing, lemmatizing, and removing stopwords. For each dataset domain, we also removed the domain word list -- (`movie, watch, film, character' for the IMDB dataset, `food, place, location, service, time, room, restaurant' for the Yelp dataset, and `book, author, pages, read, product' for the Amazon dataset). The concept tokens in the labels assigned by GPT3 were in the top-6 topics for these datasets.

\begin{figure}[ht]
    \centering
    \includegraphics[width=\linewidth]{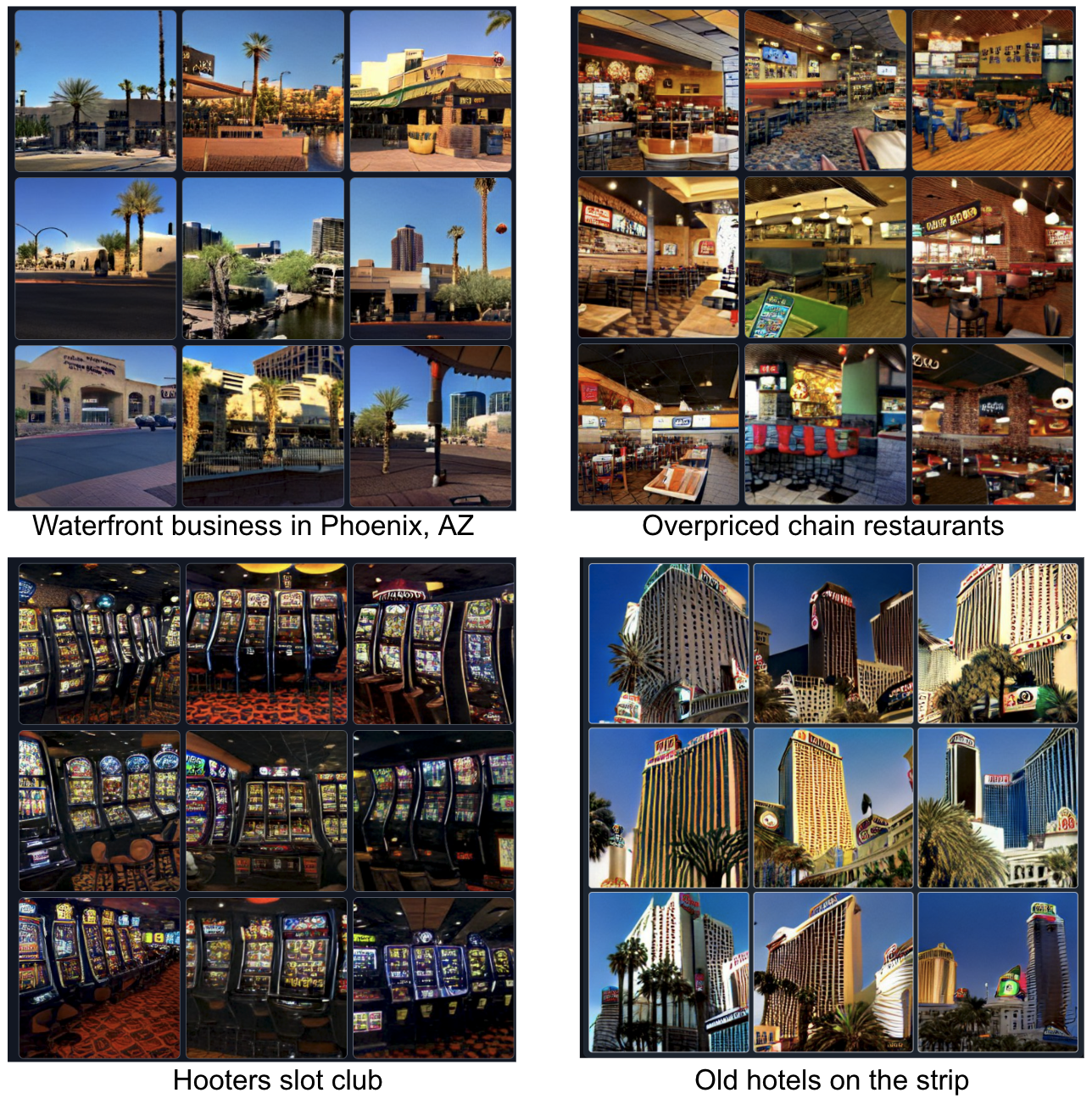}
    \caption{Examples of visualizations generated using Dalle-mini (Craiyon) for a sample of error groups.}
    \label{dalle}
\end{figure}

\seal~also supports querying the dalle-mini API to generate visual features that would support with error discovery.\footnote{\url{https://huggingface.co/spaces/dalle-mini/dalle-mini}} We augment the semantic labels generated using a LLM with the text-to-image diffusion model such as the dalle-mini. The goal is to further support systematic error discovery especially for users that are not domain experts in the dataset they are using. For example it is easy to imagine what `frozen custard' but it might not be obvious what `hooters slot club' is or what a `waterfront business in Phoenix, AZ' means. As shown in Figure~\ref{dalle}, the visual features help with further analysis and provide clear actionable insights. 
\subsection{System Architecture}
The interface is implemented as a Streamlit3 application with some customized HTML/JavaScript component that handles interactions in the tool. 
We use the Altair library customized with HTML/JavaScript and CSS for richer interactive visualization of embeddings. 
The visual component of the tool enables a user to interactively hover on data points and get information about the content, label, prediction, loss, and cluster (as in Figure~\ref{case-2}).
All the data preprocessing is powered by the Pandas library and all the manipulations on the data (such as extracting the layer embeddings, clustering, etc.) are stored as DataFrames thus providing a single interface for users to extend with custom data processing functions. 
We also provide pre-processing scripts to generate and cache all data required by~\seal~to ensure fast response times in the interface.
The scripts also include code to run inference (forward pass) on any HF dataset and model as well as a hook to extract learned representations from any layer of a loaded model. 
The workflow in~\seal~also enables users to interactively visualize data points with high loss using the streamlit slider widget to control the loss quantile that is highlighted on the interface.
\section{Results and Case Study}
In this section, we discuss some results using the~\seal~pipeline and walk through a case study for an interactive analysis with the tool.
\label{results}
\subsection{Experimental Results}
Table~\ref{seal_results} shows the results obtained using~\seal~on three sentiment classification datasets, Amazon~\citep{amazon}, Yelp~\citep{yelp}, and IMDB~\citep{imdb} for Distilbert~\citep{Sanh2019DistilBERTAD} and Albert~\citep{Lan2020ALBERT:}. For each dataset block in the table, we select the subset of group labels that were not generic (``customer reviews'', ``book reviews'') and either had proper names in them such as ``LensCrafters'', ``Eragon'' or  common nouns with properties such as ``trashy movies'', ``fine dining'', ``overpriced chain restaurants''.~\footnote{See relation /r/HasProperty \url{https://github.com/commonsense/conceptnet5/wiki/Relations}} We then measured model performance on all examples in the evaluation dataset that matched the group description to obtain the group accuracy.  Table~\ref{sample} shows the content for a random sample of examples in the error categories discovered using~\seal.

An unintended but interesting use case of~\seal~is to discover mislabeled candidate examples. We found that some groups have labels describing a sentiment such as ``trashy movies'', ``terrible food'' but with opposite ground truth sentiment. On further investigation, we found that indeed many of the groups have noisy labels and the model is actually predicting the correct sentiment. Table~\ref{appendixA} in the appendix shows a sample of such mislabeled candidate examples from each dataset studied in this paper.

\textbf{Limitations.} SEAL relies on the semantic robustness of the labeling LLM such as GPT3. We did not test cluster labeling on NLP tasks that require understanding semantic phenomena or function word.
\begin{table*}[t!]
\scriptsize
\begin{tabular}{@{}p{0.10\textwidth}p{0.75\textwidth}p{0.05\textwidth}p{0.05\textwidth}@{}}
\toprule
\textbf{Group label}&\textbf{Content}&\textbf{Label}&\textbf{Pred}\\
\midrule
Club reviews&Being from Southern California, the ``scene'' is so much fun. There are several clubs to go to and any night is a great time. That brings us to the Phoenix scene and The Cash.Oh wait, there is no scene for the ladies. Not going to bash them to hard, because it's the only consistent place that we have. Yes it caters to the Country music crowd, but they do play spurts of other music through out the weekend evenings.The mixed drinks could be better, but the prices are reasonable.&0&1\\
\\
&I used to come here for years, maybe about a year back.. the best weekend drinkfests back then: Fridays were ladies night (dollar well, wines and domestics, \$2 you call its, and no cover). Saturdays were free beer night (draft bud light, coors light and pbr til they gave out 1,000 of each.. again, no cover). Was always packed and played a decent variety of music; pitchers for beer pong were also always dirt cheap. And despite, the bartenders were way personable and fun.I'm not trying to sound like a cheapskate, as I am in the service industry myself.. but there must've been a change of ownership since my prior experiences.[..]&0&1\\
\\
Dentist reviews&Thank you for all the emails you sent me on my review! I was surprised at how many responses I recieved from people searching for the right dentist..I shared my new dentist information and even got some movie tickets from my dentist for the referrals! I find it funny how since I wrote this review how many people have reviewed with 5 stars... They must have a lot of friends and family! I hope everyone reads my review and picks the right dentist for your needs! Happy Holidays&0&1\\
\\
&After dealing with a two week long migraine and severe pressure and pain in my face, I called around looking for an ENT that could get me in ASAP. Dr. Simms was available for a same day appointment and I scheduled with him for that afternoon. The wait time itself wasn't bad - 10-15 minutes after completing paperwork. Dr. Simms was personable enough and after evaluating me, told me that he would like to treat for a sinus infection with antibiotics and prednisone. As I had just moved and newly became a student, I didn't yet have health insurance set up.[..] &0&1\\
\\
Chain restaurant reviews&I tried Cozymel's on a recommendation from my parents. Living in San Diego, I never go to chain Mexican places - there are just too many other places to try. I was expecting Cozymel's to be okay, nothing great.We went for lunch, and I was happy to see a whole page of lunch specials for about \$8. Usually, an enchilada combo plate could set you back close to \$15 at a Mexican chain. Not here (during lunch at least). I ordered the taco salad with black beans instead of meat. It came in an enormous flour tortilla shell - tostada style.[..]&1&0\\
\\
&I still can't get over how I paid \$2.99 for a coffee and 3 doughnuts! What a deal. I was debating whether or not to go to Krispy Kreme or Winchells but decided on the latter since it wasn't a chain and I could get Krispy Kreme elsewhere...Winchell's shares space with Subway which was a little random but I didn't have any problem with it because the woman helping me and what I assume to be the owner were both very nice and sweet. I hadn't eaten doughnuts in a little over a year so I decided to go with a boston creme (one of my favorites) and got a chocolate glazed chocolate doughnut for my sister and a glazed for my friend.[..]&1&0\\
\\
 \bottomrule
\end{tabular}%
\caption{Random sample from under-performing groups discovered by SEAL for the Yelp dataset. Results for other datasets are in Table~\ref{sample_Appendix} in the appendix. 0 and 1 indicate negative and positive sentiment classes respectively. The reviews ending in [..] have been truncated to save space.}
\label{sample}
\end{table*}

\begin{figure}[t]
    \centering
    \includegraphics[width=\linewidth]{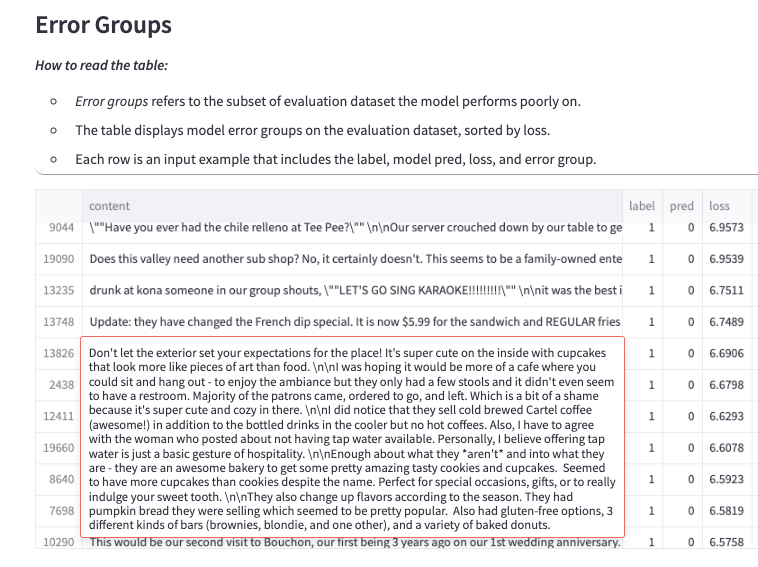}
    \caption{Snapshot of~\seal~showing the table of examples with highest-loss and their clusters.}
    \label{case-1}
\end{figure}

\begin{figure}[ht]
    \centering
    \vspace{-30pt}
    \includegraphics[width=0.75\linewidth]{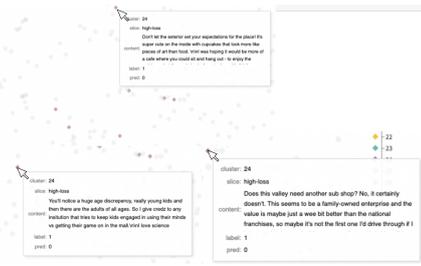}
    \caption{Snapshot from the~\seal~interface highlighting a group of examples with high-loss that are candidates for a systematic error type where reviews consist of customer experiences being better than their expectation of the place.}
    \label{case-2}
\end{figure}

\subsection{Case Study}
\seal~with its interactive interface enables practitioners to discover possible systematic errors in their models. In this section, we walk-through a case study of identifying such errors with the Albert-base-v2 model finetuned and evaluated on the Yelp dataset.
The user first loads the model and dataset in the tool and examines the examples with the highest-loss as in Figure~\ref{case-1}. They notice that the example includes  customer reviews where there was discrepancy between expectation and reality. They then want to zoom in to figure out similar reviews in the dataset where customers experiences differed from their expectations. They run the clustering and visualize the high-loss examples interactively. After trying a few values of `\# of clusters', the user finds that indeed there are many other such examples that surface in the visualization component of~\seal~as shown in Figure~\ref{case-2}. The model underperforms on examples of the type where the customer expectation is negative but the reality is actually positive.

\section{Mathematical robustness of SEAL}
\label{sec:theory}

In this section, we provide theoretical guarantees for the stability of semantic labels generated by the \seal{} pipeline. More specifically, our stability theorem states that a small perturbation of the input of our \seal{} pipeline would only cause a small bounded difference of the semantic labels. 
An implication of our theoretical results is that, even if two users are using different versions of an evaluation set (e.g., a different split, or a smaller subset), \seal{} would generate similar semantic labels. 

More formally, we ask: How does a small change in the input dataset $\{(x_i,y_i)\}_{i=1}^{n}$ affect the semantic label tuple $M \triangleq \{m_k\}_{k=1}^{K}$? 
Here, $K$ denotes the number of explanations, $m_k\triangleq(w_k,s_k,a_k)$ encodes the $k$th explanation message, 
where $w_k, s_k$, and $a_k$ represent the sentence vector, the number of data points explained by this message, and the average accuracy among those data points. Here we show that under some assumptions, the outputs of SEAL, i.e. the set of $m_k$, is relatively robust to randomness in the input dataset. To be more precise,  we need  a distance metric  on explanation message space. 
\begin{definition}
Given any two semantic label tuple $M=\{m_k\}_{k=1}^{K}$ and $M'=\{m_k'\}_{k=1}^{K}$, define a distance $d_{\max}(M,M')$ between them as 
\begin{equation*}
 \max_{1\leq i\leq K} \min_{1 \leq j \leq K} \|m_i-m'_j\|_2 + \|m'_i-m_j\|_2 
\end{equation*}
\end{definition}
\paragraph{Remark.} The $\ell_2$ distance $\|\|_2$ is defined on the vectorized explanation. In other words, we concatenate the sentence vector, data point number, and the accuracy value in one single vector, and then measure the distance of two explanation messages by the distance of their corresponding expanded vectors.  

Here, a small  distance value $d_{\max}$ implies a small difference in the explanation word vector, the size of each cluster, and the accuracy within each cluster.
To see this, note that a small distance implies that for any messages $m_i$ and $m_j$ in $M$, one can find two other messages $m_i'$ and $m_j'$ in $M'$, which are close to them. That is to say, each for any message in $M$, there is a message in $M'$ approximately equal to it.
Now we can answer the raised question.

\begin{theorem}\label{thm:seal:robustness}
Let $S$ and $T$ denote two set of $n$ data points i.i.d. from some data distribution $P$. Suppose the probability space of $P$ is  compact with size $B$, and the density function is bounded. 
Let $M_S$ and $M_T$ be the semantic label tuples generated by \seal{} with input $S$ and $T$. 
If $S$ and $T$ differs in $o(\sqrt{n})$ data points, and the the clustering algorithm  gives the exactly optimal solution, then we have 
\begin{equation*}
d_{\max}(M_S, M_T) \xrightarrow[]{P} 0,     
\end{equation*}
i.e., $d_{\max}(M_S, M_T)$ converges to $0$ in probability.
\end{theorem}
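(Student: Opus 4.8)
The \seal{} pipeline factors as a composition: (i) evaluate the fixed model loss on every point and retain the top loss-quantile slice; (ii) run exactly-optimal $k$-means (with the $\sqrt{n/2}$ cluster-count heuristic, possibly followed by recursive splitting) on the hidden-state embeddings of that slice; (iii) label each cluster with the LLM and embed the label to obtain $w_k$; (iv) record the cluster size $s_k$ and within-cluster accuracy $a_k$. The plan is to show that under an $o(\sqrt n)$ change of the dataset each stage moves only by $o(1)$ in the relevant metric, and then compose the estimates. Write $S_{\mathrm{hi}},T_{\mathrm{hi}}$ for the two high-loss slices, $m=|S_{\mathrm{hi}}|=\Theta(n)$, and let $D\ge 3$ be the (high) embedding dimension; note that $o(\sqrt n)$ sits below both the $\Theta(\sqrt n)$ number of clusters and the $\Theta(\sqrt n)$ typical cluster size, so a changed point lands in ``its own'' Voronoi cell and perturbs that cell's statistics by only $o(1)$. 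For Step 1 itself: since $\ell(x,y)$ is a fixed deterministic function, the loss multisets of $S$ and $T$ differ only at the $o(\sqrt n)$ places where $S$ and $T$ differ, so their empirical loss CDFs are within $o(n^{-1/2})$ in sup-norm; by the bounded-density hypothesis the loss CDF is locally bi-Lipschitz near the fixed quantile level, so the empirical quantiles $t_S,t_T$ satisfy $|t_S-t_T|=o(n^{-1/2})$ w.h.p.\ and only $O(n|t_S-t_T|)=o(\sqrt n)$ points have loss between them. Hence $|S_{\mathrm{hi}}\triangle T_{\mathrm{hi}}|=o(\sqrt n)$ while $|S_{\mathrm{hi}}|,|T_{\mathrm{hi}}|=\Theta(n)$, and by Glivenko--Cantelli/Donsker each slice is, up to this $o(\sqrt n)$ discrepancy, an i.i.d.\ sample from $P$ conditioned on the high-loss event, which inherits compact support of diameter $\le B$ and a bounded density.

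\textbf{Step 2 (the crux): stability of the optimal clustering.} With $K=\Theta(\sqrt m)$, high-rate quantization theory (Zador's theorem and its empirical counterpart) gives that the optimal $k$-means cost on an i.i.d.\ sample of size $m$ from a compactly-supported bounded-density law on $\mathbb R^D$ is $\Theta(mK^{-2/D})=\Theta(n^{1-1/D})$, and that the empirical distribution of optimal centroids converges in $W_2$ to a limit $\nu_P$ determined by $P$ alone, with every optimal centroid carrying mass $\Theta(1/K)$ and lying within $o(1)$ of the support. Feeding the optimal centroids of $S_{\mathrm{hi}}$ to $T_{\mathrm{hi}}$ and reassigning the $o(\sqrt n)$ surplus points to their nearest centroid shows $|\mathrm{OPT}(S_{\mathrm{hi}})-\mathrm{OPT}(T_{\mathrm{hi}})|=O(B^2 o(\sqrt n))=o(\sqrt n)$, a \emph{vanishing fraction} of the cost because $\sqrt n\ll n^{1-1/D}$ for $D\ge 3$. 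Thus both optimal configurations are $(1+o(1))$-optimal for the common limiting quantization problem, both converge to $\nu_P$, and --- since neither produces low-mass outlier centroids --- this upgrades to Hausdorff convergence of the two centroid sets, hence to a matching $\pi$ of the clusters of $S_{\mathrm{hi}}$ and $T_{\mathrm{hi}}$ along which the centroids, the (normalized) sizes, and the within-cluster accuracies all agree up to $o(1)$.

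\textbf{Step 3: labeling and assembly.} Along $\pi$, matched clusters of $S$ and $T$ share all but an $o(1)$ fraction of their $\Theta(\sqrt n)$ members, so by Hoeffding the within-cluster accuracies and normalized sizes converge, $|a_k-a'_{\pi(k)}|,|s_k-s'_{\pi(k)}|\xrightarrow{P}0$. For the label vectors, invoke the assumed semantic robustness of the labeling LLM (the standing hypothesis noted in the Limitations): a nearly-identical document set is mapped to a nearly-identical label, so $\|w_k-w'_{\pi(k)}\|_2\xrightarrow{P}0$. Concatenating the three components into the vectorized messages $m_k=(w_k,s_k,a_k)$ gives $\max_k\|m_k-m'_{\pi(k)}\|_2\xrightarrow{P}0$; since $\nu_P$ is the same limit from both sides, $\pi$ may be taken essentially involutive (and the two cluster counts themselves differ by only $o(n^{1/4})$, the surplus cells also limiting to $\nu_P$, so the $\min_j$ in $d_{\max}$ still finds a match), so for each $i$ the single index $j=\pi(i)$ makes both $\|m_i-m'_j\|_2$ and $\|m'_i-m_j\|_2$ small, and taking the outer $\max_i$ yields $d_{\max}(M_S,M_T)\xrightarrow{P}0$.

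\textbf{Main obstacle.} Everything outside Step 2 is bookkeeping with concentration inequalities; the substantive difficulty is the stability of the \emph{global} $k$-means optimum, because a priori two datasets with nearly identical empirical measures can admit genuinely different global optima of nearly equal cost, so ``small cost gap'' does not by itself force ``close clusterings''. Making Step 2 rigorous therefore requires either a uniqueness / well-separatedness property of the limiting quantizer $\nu_P$, or a quantitative high-rate quantization estimate that controls the \emph{configuration} of optimal centroids rather than merely the cost; and one must additionally check that the recursive sub-splitting down to groups of size $<25$ --- which pushes $K$ up toward $\Theta(n)$ and cluster sizes down to $O(1)$ --- does not destroy this stability, which is precisely the regime in which the ``$o(\sqrt n)$ beats the cluster count'' heuristic is tightest and the argument most delicate.
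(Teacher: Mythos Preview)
Your proposal is considerably more ambitious than the paper's own proof, and the gap you flag in Step~2 is real --- but the paper sidesteps it rather than resolves it.

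The paper's argument is just two lemmas. The first is quoted from Rakhlin et al.\ (2006): for \emph{fixed} $K$, if $S$ and $T$ differ in $o(\sqrt n)$ points and both clusterings are exact optima, then $d_{\max}$ of the two centroid sets converges to $0$ in probability. The second is a deterministic Lipschitz estimate: assuming the sentence-vector map is $\beta$-Lipschitz \emph{as a function of the cluster center} (not of the document multiset), a centroid gap of $\varepsilon$ forces $d_{\max}(M_S,M_T)\le 3\varepsilon\max\{6K^2B,\beta\}$. The $w$-component comes straight from Lipschitz-in-center; the $s$- and $a$-components come from a crude volume count (centroids within $\varepsilon$ can reclassify at most $O(\varepsilon K^2 B)$ points). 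Chaining the two lemmas gives the theorem.

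Three points of contrast. First, the paper does not analyze the loss-quantile slicing at all; it applies the stability lemma directly to $S$ and $T$. Your Step~1 is extra work the paper never does. Second, the paper implicitly treats $K$ as a fixed constant: both the cited Rakhlin result and the $K^2$ factor in the second lemma would blow up for $K=\Theta(\sqrt n)$. Your high-rate quantization route with growing $K$ is genuinely harder and, as you correctly identify, needs a uniqueness or well-separatedness property of the limiting quantizer that neither you nor the paper supplies. Third, and most consequentially, the paper models the labeling step as $w_k=f(c_k)$ with $f$ Lipschitz in the centroid; this is a much stronger structural assumption than your ``semantic robustness on nearly-identical document sets'' and is exactly what lets the paper bound the $w$-component without ever tracking cluster membership.

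So: the obstacle you name is a genuine obstacle for \emph{your} argument, but the paper's resolution is simply to assume it away --- fixed $K$, off-the-shelf centroid stability, and Lipschitz-in-center labeling. If you want to match the paper, drop the growing-$K$, sub-splitting, and quantile analysis, assume the label embedding is Lipschitz in the cluster center, cite Rakhlin for centroid convergence, and finish with the volume-counting bound; your more faithful modeling of the actual pipeline buys realism at the cost of a gap the paper never tries to close.
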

The proof of this theorem is in the Appendix. It implicitly relies on Lipschitz continuity of the sentence generation network, which actually holds for most DNNs with finite input space.
This indicates \seal{} is robust to small perturbation in the input dataset: a small shift in the input dataset only leads to small explanation change. Such a smooth explanation change is particularly useful when users gradually update the their dataset.
\section{Conclusion}
In this work we introduced~\seal, an interactive visualization tool for discovering systematic errors and labeling them.
Through case studies we showed how SEAL can  efficiently identify the systematic failures of state-of-the-art sentiment classification models on well known datasets.
We released a set of pre-computed model outputs to enable easy, out-of-the-box use especially for non-coding audience such as domain experts.
We hope this work will positively contribute to the ongoing efforts in building tools for systematic error analysis and model debugging.
\section{Ethics Statement}
Many datasets currently used and open-sourced by the NLP community are mainly crawled from the web and therefore are not representative of a majority of geographies. There are biases that can distill into parameters of models trained on such biased datasets and may even be further amplified in the generated model outputs.  All datasets we experimented with are in English, and all models are trained on English datasets.

We use GPT3 for semantic labeling and it is well-known that LLMs such as GPT3 can generate toxic, harmful, hate content that might have also percolated into our tool. Similarly, the semantic similarity metrics used in our tool including the BERTScore and the word-embeddings carry biases of the data they were trained on. 
We request our users to be aware of these ethical issues that might affect their analyses.

\section*{Acknowledgements}
We thank the anonymous reviewers for their constructive feedback. We are also thankful to David McClure and Christopher Akiki whose work inspired some of the interactive components in the demo accompanying this paper.
\bibliography{custom}

\begin{thebibliography}{30}
\expandafter\ifx\csname natexlab\endcsname\relax\def\natexlab#1{#1}\fi

\bibitem[{BigScience(2022)}]{bloom}
BigScience. 2022.
\newblock \href {https://huggingface.co/blog/bloom} {Bigscience large
  open-science open-access multilingual language model}.

\bibitem[{Blei et~al.(2003)Blei, Ng, Jordan, and
  Lafferty}]{Blei03latentdirichlet}
David~M. Blei, Andrew~Y. Ng, Michael~I. Jordan, and John Lafferty. 2003.
\newblock Latent dirichlet allocation.
\newblock \emph{Journal of Machine Learning Research}, 3:2003.

\bibitem[{Dayma et~al.(2021)Dayma, Patil, Cuenca, Saifullah, Abraham, Le~Khac,
  Melas, and Ghosh}]{dalle}
Boris Dayma, Suraj Patil, Pedro Cuenca, Khalid Saifullah, Tanishq Abraham, Phuc
  Le~Khac, Luke Melas, and Ritobrata Ghosh. 2021.
\newblock \href {https://doi.org/10.5281/zenodo.5146400} {Dall·e mini}.

\bibitem[{d'Eon et~al.(2022)d'Eon, d'Eon, Wright, and
  Leyton-Brown}]{d2022spotlight}
Greg d'Eon, Jason d'Eon, James~R Wright, and Kevin Leyton-Brown. 2022.
\newblock The spotlight: A general method for discovering systematic errors in
  deep learning models.
\newblock In \emph{2022 ACM Conference on Fairness, Accountability, and
  Transparency}, pages 1962--1981.

\bibitem[{Eyuboglu et~al.(2022)Eyuboglu, Varma, Saab, Delbrouck, Lee-Messer,
  Dunnmon, Zou, and Ré}]{eyuboglu2022domino}
Sabri Eyuboglu, Maya Varma, Khaled Saab, Jean-Benoit Delbrouck, Christopher
  Lee-Messer, Jared Dunnmon, James Zou, and Christopher Ré. 2022.
\newblock Domino: Discovering systematic errors with cross-modal embeddings.
\newblock In \emph{International Conference on Learning Representations
  (ICLR)}.

\bibitem[{Goel et~al.(2021{\natexlab{a}})Goel, Orr, Rajani, Vig, and
  R{\'e}}]{goel-etal-2021-goodwill}
Karan Goel, Laurel Orr, Nazneen~Fatema Rajani, Jesse Vig, and Christopher
  R{\'e}. 2021{\natexlab{a}}.
\newblock \href {https://doi.org/10.18653/v1/2021.naacl-industry.26} {Goodwill
  hunting: Analyzing and repurposing off-the-shelf named entity linking
  systems}.
\newblock In \emph{Proceedings of the 2021 Conference of the North American
  Chapter of the Association for Computational Linguistics: Human Language
  Technologies: Industry Papers}, pages 205--213, Online. Association for
  Computational Linguistics.

\bibitem[{Goel et~al.(2021{\natexlab{b}})Goel, Rajani, Vig, Tan, Wu, Zheng,
  Xiong, Bansal, and R{\'e}}]{goel2021robustness}
Karan Goel, Nazneen Rajani, Jesse Vig, Samson Tan, Jason Wu, Stephan Zheng,
  Caiming Xiong, Mohit Bansal, and Christopher R{\'e}. 2021{\natexlab{b}}.
\newblock Robustness gym: Unifying the nlp evaluation landscape.
\newblock \emph{arXiv preprint arXiv:2101.04840}.

\bibitem[{Hamilton(2018)}]{Isobel2018Amazon}
Isobel~Asher Hamilton. 2018.
\newblock \href
  {https://www.businessinsider.com/amazon-built-ai-to-hire-people-discriminated-against-women-2018-10}
  {{Amazon built an AI tool to hire people but had to shut it down because it
  was discriminating against women}}.

\bibitem[{Hamilton(2020)}]{Isobel2020Twitter}
Isobel~Asher Hamilton. 2020.
\newblock \href
  {https://www.businessinsider.com/twitter-investigating-picture-preview-algorithm-racial-bias-2020-9}
  {Twitter is investigating after anecdotal data suggested its picture-cropping
  tool favors white faces}.

\bibitem[{Hao(2019)}]{Karen2019Facebook}
Karen Hao. 2019.
\newblock \href
  {https://www.technologyreview.com/2019/04/05/1175/facebook-algorithm-discriminates-ai-bias/}
  {Facebook’s ad-serving algorithm discriminates by gender and race}.

\bibitem[{Jia and Liang(2017)}]{jia-liang-2017-adversarial}
Robin Jia and Percy Liang. 2017.
\newblock \href {https://doi.org/10.18653/v1/D17-1215} {Adversarial examples
  for evaluating reading comprehension systems}.
\newblock In \emph{Proceedings of the 2017 Conference on Empirical Methods in
  Natural Language Processing}, pages 2021--2031, Copenhagen, Denmark.
  Association for Computational Linguistics.

\bibitem[{Kaushik et~al.(2020)Kaushik, Hovy, and Lipton}]{Kaushik2020Learning}
Divyansh Kaushik, Eduard Hovy, and Zachary Lipton. 2020.
\newblock \href {https://openreview.net/forum?id=Sklgs0NFvr} {Learning the
  difference that makes a difference with counterfactually-augmented data}.
\newblock In \emph{International Conference on Learning Representations}.

\bibitem[{Kayser-Bril(2020)}]{Nicolas2020Google}
Nicolas Kayser-Bril. 2020.
\newblock \href {https://algorithmwatch.org/en/story/google-vision-racism/}
  {{Google apologizes after its Vision AI produced racist results}}.

\bibitem[{Kiela et~al.(2021)Kiela, Bartolo, Nie, Kaushik, Geiger, Wu, Vidgen,
  Prasad, Singh, Ringshia, Ma, Thrush, Riedel, Waseem, Stenetorp, Jia, Bansal,
  Potts, and Williams}]{kiela-etal-2021-dynabench}
Douwe Kiela, Max Bartolo, Yixin Nie, Divyansh Kaushik, Atticus Geiger,
  Zhengxuan Wu, Bertie Vidgen, Grusha Prasad, Amanpreet Singh, Pratik Ringshia,
  Zhiyi Ma, Tristan Thrush, Sebastian Riedel, Zeerak Waseem, Pontus Stenetorp,
  Robin Jia, Mohit Bansal, Christopher Potts, and Adina Williams. 2021.
\newblock \href {https://doi.org/10.18653/v1/2021.naacl-main.324} {Dynabench:
  Rethinking benchmarking in {NLP}}.
\newblock In \emph{Proceedings of the 2021 Conference of the North American
  Chapter of the Association for Computational Linguistics: Human Language
  Technologies}, pages 4110--4124, Online. Association for Computational
  Linguistics.

\bibitem[{Knight(2019)}]{Will2019Apple}
Will Knight. 2019.
\newblock \href
  {https://www.wired.com/story/the-apple-card-didnt-see-genderand-thats-the-problem/}
  {{The Apple Card Didn't 'See' Gender—and That's the Problem}}.

\bibitem[{Lan et~al.(2020)Lan, Chen, Goodman, Gimpel, Sharma, and
  Soricut}]{Lan2020ALBERT:}
Zhenzhong Lan, Mingda Chen, Sebastian Goodman, Kevin Gimpel, Piyush Sharma, and
  Radu Soricut. 2020.
\newblock \href {https://openreview.net/forum?id=H1eA7AEtvS} {Albert: A lite
  bert for self-supervised learning of language representations}.
\newblock In \emph{International Conference on Learning Representations}.

\bibitem[{Liang and Zou(2022)}]{liang2022metashift}
Weixin Liang and James Zou. 2022.
\newblock \href {https://openreview.net/forum?id=MTex8qKavoS} {Metashift: A
  dataset of datasets for evaluating contextual distribution shifts and
  training conflicts}.
\newblock In \emph{International Conference on Learning Representations}.

\bibitem[{Maas et~al.(2011)Maas, Daly, Pham, Huang, Ng, and Potts}]{imdb}
Andrew~L. Maas, Raymond~E. Daly, Peter~T. Pham, Dan Huang, Andrew~Y. Ng, and
  Christopher Potts. 2011.
\newblock \href {http://www.aclweb.org/anthology/P11-1015} {Learning word
  vectors for sentiment analysis}.
\newblock In \emph{Proceedings of the 49th Annual Meeting of the Association
  for Computational Linguistics: Human Language Technologies}, pages 142--150,
  Portland, Oregon, USA. Association for Computational Linguistics.

\bibitem[{McAuley and Leskovec(2013)}]{amazon}
Julian McAuley and Jure Leskovec. 2013.
\newblock \href {https://doi.org/10.1145/2507157.2507163} {Hidden factors and
  hidden topics: Understanding rating dimensions with review text}.
\newblock In \emph{Proceedings of the 7th ACM Conference on Recommender
  Systems}, RecSys '13, page 165–172, New York, NY, USA. Association for
  Computing Machinery.

\bibitem[{McCoy et~al.(2019)McCoy, Pavlick, and Linzen}]{mccoy-etal-2019-right}
Tom McCoy, Ellie Pavlick, and Tal Linzen. 2019.
\newblock \href {https://doi.org/10.18653/v1/P19-1334} {Right for the wrong
  reasons: Diagnosing syntactic heuristics in natural language inference}.
\newblock In \emph{Proceedings of the 57th Annual Meeting of the Association
  for Computational Linguistics}, pages 3428--3448, Florence, Italy.
  Association for Computational Linguistics.

\bibitem[{Meng et~al.(2022)Meng, Zhang, Huang, Zhang, and Han}]{meng2022topic}
Yu~Meng, Yunyi Zhang, Jiaxin Huang, Yu~Zhang, and Jiawei Han. 2022.
\newblock Topic discovery via latent space clustering of pretrained language
  model representations.
\newblock In \emph{Proceedings of the ACM Web Conference 2022}, pages
  3143--3152.

\bibitem[{Ouyang et~al.(2022)Ouyang, Wu, Jiang, Almeida, Wainwright, Mishkin,
  Zhang, Agarwal, Slama, Ray, Schulman, Hilton, Kelton, Miller, Simens, Askell,
  Welinder, Christiano, Leike, and Lowe}]{gpt}
Long Ouyang, Jeff Wu, Xu~Jiang, Diogo Almeida, Carroll~L. Wainwright, Pamela
  Mishkin, Chong Zhang, Sandhini Agarwal, Katarina Slama, Alex Ray, John
  Schulman, Jacob Hilton, Fraser Kelton, Luke Miller, Maddie Simens, Amanda
  Askell, Peter Welinder, Paul Christiano, Jan Leike, and Ryan Lowe. 2022.
\newblock \href {https://doi.org/10.48550/ARXIV.2203.02155} {Training language
  models to follow instructions with human feedback}.

\bibitem[{Rakhlin and Caponnetto(2006)}]{rakhlin2006stability}
Alexander Rakhlin and Andrea Caponnetto. 2006.
\newblock Stability of $ k $-means clustering.
\newblock \emph{Advances in neural information processing systems}, 19.

\bibitem[{Ribeiro et~al.(2020)Ribeiro, Wu, Guestrin, and
  Singh}]{ribeiro-etal-2020-beyond}
Marco~Tulio Ribeiro, Tongshuang Wu, Carlos Guestrin, and Sameer Singh. 2020.
\newblock \href {https://doi.org/10.18653/v1/2020.acl-main.442} {Beyond
  accuracy: Behavioral testing of {NLP} models with {C}heck{L}ist}.
\newblock In \emph{Proceedings of the 58th Annual Meeting of the Association
  for Computational Linguistics}, pages 4902--4912, Online. Association for
  Computational Linguistics.

\bibitem[{Sanh et~al.(2019)Sanh, Debut, Chaumond, and
  Wolf}]{Sanh2019DistilBERTAD}
Victor Sanh, Lysandre Debut, Julien Chaumond, and Thomas Wolf. 2019.
\newblock Distilbert, a distilled version of bert: smaller, faster, cheaper and
  lighter.
\newblock \emph{ArXiv}, abs/1910.01108.

\bibitem[{Stuart-Ulin(2018)}]{Chloe2018Microsoft}
Chloe~Rose Stuart-Ulin. 2018.
\newblock \href
  {https://qz.com/1340990/microsofts-politically-correct-chat-bot-is-even-worse-than-its-racist-one}
  {Microsoft’s politically correct chatbot is even worse than its racist
  one}.

\bibitem[{Swayamdipta et~al.(2020)Swayamdipta, Schwartz, Lourie, Wang,
  Hajishirzi, Smith, and Choi}]{swayamdipta-etal-2020-dataset}
Swabha Swayamdipta, Roy Schwartz, Nicholas Lourie, Yizhong Wang, Hannaneh
  Hajishirzi, Noah~A. Smith, and Yejin Choi. 2020.
\newblock \href {https://doi.org/10.18653/v1/2020.emnlp-main.746} {Dataset
  cartography: Mapping and diagnosing datasets with training dynamics}.
\newblock In \emph{Proceedings of the 2020 Conference on Empirical Methods in
  Natural Language Processing (EMNLP)}, pages 9275--9293, Online. Association
  for Computational Linguistics.

\bibitem[{Wu et~al.(2019)Wu, Ribeiro, Heer, and Weld}]{wu2019errudite}
Tongshuang Wu, Marco~Tulio Ribeiro, Jeffrey Heer, and Daniel~S Weld. 2019.
\newblock Errudite: Scalable, reproducible, and testable error analysis.
\newblock In \emph{Proceedings of the 57th Annual Meeting of the Association
  for Computational Linguistics}, pages 747--763.

\bibitem[{Yuan et~al.(2022)Yuan, Vig, and Rajani}]{Yuan_2022}
Jun Yuan, Jesse Vig, and Nazneen Rajani. 2022.
\newblock \href {https://doi.org/10.1145/3490099.3511146} {{iSEA}: An
  interactive pipeline for semantic error analysis of {NLP} models}.
\newblock In \emph{27th International Conference on Intelligent User
  Interfaces}. {ACM}.

\bibitem[{Zhang et~al.(2015)Zhang, Zhao, and LeCun}]{yelp}
Xiang Zhang, Junbo Zhao, and Yann LeCun. 2015.
\newblock \href {http://arxiv.org/abs/1509.01626} {Character-level
  {{Convolutional Networks}} for {{Text Classification}}}.
\newblock \emph{arXiv:1509.01626 [cs]}.

\end{thebibliography}
\bibliographystyle{acl_natbib}

\appendix
\newpage
\onecolumn
\label{sec:appendix}


\section{Appendix: Proofs}
\label{appendix:sec:proofs}
\begin{proof}
Here we prove the proof for Theorem \ref{thm:seal:robustness}.
To proceed, we need a few  lemmas.

\begin{lemma}[adapted from Proposition 5.1. in \cite{rakhlin2006stability}]
Assume the density of $P$ (with respect to the Lebesgue measure $\lambda$ over $\mathcal{Z}$ ) is bounded away from 0, i.e. $d P>\mu d \lambda$ for some $\mu>0$. Suppose the clusterings $A$ and $B$ are minimizers of the $K$-means objective $W(C)$ over the sets $S$ and $T$, respectively.  Suppose that at most $ o(\sqrt{n})$ data points are different between the two dataset $S$ and $T$ sampled from $P$. Then
$$
d_{\max }\left(\left\{c_{S,1}, \ldots, c_{S,K}\right\},\left\{c_{T,1}, \ldots, c_{T,K}\right\}\right) \stackrel{P}{\longrightarrow} 0 .
$$
where $c_{S,i}$ and $c_{T,i}$ are the centers of the $i$-th cluster generated from $S$ and $T$, separately. 
\end{lemma}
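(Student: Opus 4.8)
The plan is to prove center-stability by reducing it to the joint convergence of both exact empirical minimizers toward a common population optimum, and then transferring closeness in objective value into closeness of the centers. Write $W_P(\mathbf{c}) = \mathbb{E}_{x\sim P}\bigl[\min_{1\le j\le K}\|x-c_j\|_2^2\bigr]$ for the population $K$-means risk of a configuration $\mathbf{c}=(c_1,\dots,c_K)\in\mathcal{Z}^K$, and $W_U(\mathbf{c})=\frac{1}{|U|}\sum_{x\in U}\min_j\|x-c_j\|_2^2$ for the empirical risk on a finite set $U$; here $d_{\max}$ on center tuples is the symmetric nearest-neighbour metric of the Definition, with the centers $c_{\cdot,i}$ playing the role of the messages $m_k$. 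Because $\mathcal{Z}$ is compact with diameter at most $B$, every minimizer lies in the compact set $\mathcal{Z}^K$, so it suffices to show that both $A\triangleq(c_{S,1},\dots,c_{S,K})$ and $B\triangleq(c_{T,1},\dots,c_{T,K})$ land in an arbitrarily small neighbourhood of $\mathcal{C}^\star \triangleq \arg\min_{\mathbf{c}}W_P(\mathbf{c})$, after which the triangle inequality closes the bound.

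The first ingredient is uniform convergence of the empirical risk. Over $\mathcal{Z}^K$ the functions $x\mapsto\min_j\|x-c_j\|_2^2$ are uniformly bounded by $B^2$ and Lipschitz in $\mathbf{c}$, so the induced class has polynomial covering numbers; a standard empirical-process (Glivenko--Cantelli) bound then gives $\sup_{\mathbf{c}\in\mathcal{Z}^K}|W_S(\mathbf{c})-W_P(\mathbf{c})| = O_P(n^{-1/2})$, and the same for $T$. The second ingredient quantifies the effect of the perturbation: since $S$ and $T$ agree on all but $o(\sqrt{n})$ points and each summand is at most $B^2$, we have $\sup_{\mathbf{c}}|W_S(\mathbf{c})-W_T(\mathbf{c})| \le o(\sqrt{n})\,B^2/n = o(n^{-1/2})$. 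The key observation is that this perturbation error is of strictly smaller order than the $\Theta(n^{-1/2})$ statistical fluctuation of the empirical process, which is exactly why $o(\sqrt{n})$ is the natural threshold. Combining the first ingredient for each sample, each empirical minimizer over-shoots the population optimum by at most $O_P(n^{-1/2})$, i.e.\ $W_P(A)\le\min_{\mathbf{c}}W_P(\mathbf{c})+O_P(n^{-1/2})$ and likewise for $B$, so both $W_P(A)$ and $W_P(B)$ converge in probability to $\min_{\mathbf{c}}W_P(\mathbf{c})$.

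The final and hardest step converts ``$A$ and $B$ are near-minimizers of $W_P$'' into ``$A$ and $B$ are close to $\mathcal{C}^\star$ in $d_{\max}$,'' and this is where the density lower bound $dP>\mu\,d\lambda$ is indispensable. It guarantees that every Voronoi cell of a candidate configuration carries mass proportional to its volume, which precludes degenerate reconfigurations that move centers while barely changing the risk and yields a growth (margin) condition $W_P(\mathbf{c})-\min_{\mathbf{c}}W_P \ge \eta\bigl(\operatorname{dist}(\mathbf{c},\mathcal{C}^\star)\bigr)$ for some strictly increasing $\eta$ with $\eta(0)=0$. Such a condition turns the vanishing sub-optimality of $A$ and $B$ into $\operatorname{dist}(A,\mathcal{C}^\star)\xrightarrow{P}0$ and $\operatorname{dist}(B,\mathcal{C}^\star)\xrightarrow{P}0$; when $\mathcal{C}^\star$ is a single configuration modulo relabeling, the triangle inequality gives $d_{\max}(A,B)\xrightarrow{P}0$. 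I expect the main obstacle to be making the margin function $\eta$ rigorous --- proving that the density lower bound and compactness together forbid near-optimal configurations far from $\mathcal{C}^\star$ --- together with the delicate sub-case where $\mathcal{C}^\star$ is a finite symmetric set: there the $o(\sqrt{n})$ condition becomes essential, since it keeps the perturbation below the fluctuation scale at which the almost-identical samples $S$ and $T$ could otherwise select different symmetric optima. These are precisely the components imported from Proposition~5.1 of \cite{rakhlin2006stability}.
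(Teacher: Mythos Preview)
The paper does not prove this lemma: it is stated with attribution to Proposition~5.1 of \cite{rakhlin2006stability} and then invoked as a black box, together with the second lemma, to conclude the proof of Theorem~\ref{thm:seal:robustness}. There is therefore no argument in the paper against which to compare your proposal.

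That said, your sketch accurately reconstructs the architecture of the cited result. The three ingredients you isolate---uniform convergence of the empirical $K$-means risk over the compact set $\mathcal{Z}^K$ at rate $O_P(n^{-1/2})$, the observation that an $o(\sqrt{n})$-point perturbation shifts the empirical risk by at most $o(n^{-1/2})$ uniformly in $\mathbf{c}$ and is hence dominated by the sampling fluctuation, and the density lower bound $dP>\mu\,d\lambda$ supplying the margin (identifiability) condition that converts vanishing sub-optimality in $W_P$ into vanishing $d_{\max}$-distance to $\mathcal{C}^\star$---are exactly the components of the Rakhlin--Caponnetto stability argument. You are also right to flag the multiple-optima case as the genuine obstacle: your triangle-inequality step is only immediate when $\mathcal{C}^\star$ is a single orbit under relabeling, and the mechanism by which the $o(\sqrt{n})$ coupling forces $S$ and $T$ to select the \emph{same} near-optimum (rather than two distinct ones) requires the sharper analysis in the reference, not just the inequality you state. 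Since the paper defers entirely to that reference, there is nothing further to compare.
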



\begin{lemma}
Assume the density of $P$ (with respect to the Lebesgue measure $\lambda$ over $\mathcal{Z}$ ) is bounded away from 0, i.e. $d P> \mu d \lambda$ for some $\mu>0$. Suppose
$$
d_{\max }\left(\left\{c_{S,1}, \ldots, c_{S,K}\right\},\left\{c_{T,1}, \ldots, c_{T,K}\right\}\right) \leq \varepsilon .
$$
and the ML model that generates the sentence vector is Lipschitz continuous with parameter $\beta$.
Then
$$
d_{\max }\left(M_S,M_K\right) \leq 3 \varepsilon \max \{6  K^2B,\beta \}
$$
where $c_{c, m}$ depends only on $c$ and $m$.
\end{lemma}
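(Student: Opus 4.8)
The plan is to bound $d_{\max}(M_S,M_T)$ by controlling, one coordinate at a time, how the explanation message $m_k=(w_k,s_k,a_k)$ attached to a cluster depends on that cluster, and then to reduce everything to the hypothesized $\varepsilon$-closeness of cluster centers. First, unpack that hypothesis: since $d_{\max}(\{c_{S,k}\}_k,\{c_{T,k}\}_k)\le\varepsilon$ and both summands in the definition of $d_{\max}$ are nonnegative, for every index $i$ there is an index $j(i)$ with $\|c_{S,i}-c_{T,j(i)}\|_2\le\varepsilon$ and $\|c_{T,i}-c_{S,j(i)}\|_2\le\varepsilon$. I will use $j(\cdot)$ as a matching of $S$-clusters to $T$-clusters; it need not be injective, but that is harmless because $d_{\max}$ only asks, for each $i$, for \emph{some} close partner. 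Hence it suffices to show that $\|m_{S,i}-m_{T,j(i)}\|_2$ and $\|m_{T,i}-m_{S,j(i)}\|_2$ are each at most $\tfrac32\varepsilon\max\{6K^2B,\beta\}$; summing the two and taking the max over $i$ then yields the claim.

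The geometric heart is to bound how many sample points change cluster assignment between the two runs. A point $z$ in the size-$B$ compact probability space is assigned to different clusters by $\{c_{S,k}\}$ and by $\{c_{T,k}\}$ only if $z$ lies within $2\varepsilon$ of the Voronoi boundary of one of the two center sets, because moving every center by at most $\varepsilon$ can flip the nearest-center identity only of points within $2\varepsilon$ of a bisector. That boundary is a union of at most $\binom{K}{2}$ pieces of bisector hyperplanes, each meeting the size-$B$ region in a set of $(d-1)$-volume $O(B^{d-1})$, so the $2\varepsilon$-tube around it has volume at most $c_0\,\varepsilon K^2 B$ for an absolute constant $c_0$ absorbing the dimension. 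Multiplying by the density upper bound (the theorem's "density is bounded" hypothesis, beyond the "bounded away from zero" used in the previous lemma) and applying a uniform concentration bound over the finitely many cells, the fraction of sample points that switch clusters is $c_0\varepsilon K^2B+o_P(1)$; the $o(\sqrt n)$ points in which $S$ and $T$ themselves differ add a further $o_P(1)$ fraction. Thus cluster $i$ of $S$ and cluster $j(i)$ of $T$ agree on all but an $O(\varepsilon K^2B)+o_P(1)$ fraction of their points, and symmetrically for cluster $i$ of $T$ versus cluster $j(i)$ of $S$.

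Now convert set-level stability into message-level stability. For the size coordinate, $|s_{S,i}-s_{T,j(i)}|$ (sizes read as fractions of $n$) is at most the switching fraction, i.e. $O(\varepsilon K^2B)$. For the accuracy coordinate, $a_k$ is an average of $[0,1]$-valued correctness indicators over cluster $k$, so changing an $O(\varepsilon K^2B)$ fraction of the averaged points moves the mean by at most $O(\varepsilon K^2B)$ plus a vanishing-in-probability sampling term. For the sentence vector, the generation network $g$ is $\beta$-Lipschitz in its input; taking that input to be the cluster centroid gives $\|w_{S,i}-w_{T,j(i)}\|_2=\|g(c_{S,i})-g(c_{T,j(i)})\|_2\le\beta\varepsilon$ (and if $g$ instead ingests an encoding of the whole cluster, the same bound holds up to the $O(\varepsilon K^2B)$ perturbation just estimated). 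Using $\|(u,v,w)\|_2\le\|u\|_2+|v|+|w|$ and folding the geometric constants into the factor $6$, we get $\|m_{S,i}-m_{T,j(i)}\|_2\le\beta\varepsilon+c_1\varepsilon K^2B\le\tfrac32\varepsilon\max\{6K^2B,\beta\}$, and the symmetric estimate for $\|m_{T,i}-m_{S,j(i)}\|_2$; summing over the matched pair and maximizing over $i$ gives $d_{\max}(M_S,M_T)\le 3\varepsilon\max\{6K^2B,\beta\}$.

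The main obstacle is the second step: turning the intuitive picture "few points cross a slightly moved Voronoi wall" into a rigorous count. The delicate parts are (a) bounding the surface area of the $K$-center Voronoi skeleton inside a region of size $B$ without letting the ambient dimension leak into the final constant; (b) passing from the population statement (small measure of the switching region) to an empirical count on a finite, \emph{data-dependent} partition, which is precisely what forces the density upper bound and a uniform concentration inequality on top of the "bounded away from zero" hypothesis; and (c) ensuring the Lipschitz constant $\beta$ of the sentence-generation network does not implicitly grow with $n$ — reasonable because, as the paper notes, a DNN on a bounded input domain is globally Lipschitz, but this should be carried as a standing assumption rather than derived inside the proof.
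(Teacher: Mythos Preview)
Your approach is essentially the paper's: decompose $m_k=(w_k,s_k,a_k)$ coordinate-wise, bound the sentence-vector term via the Lipschitz constant $\beta$ applied to the center displacement, and bound the size and accuracy terms by counting how many points can change cluster when the $K$ centers move by at most~$\varepsilon$. The only execution differences are that you fix a single matching $j(i)$ from the center hypothesis and then bound all three coordinates for that same $j(i)$, and you supply an explicit Voronoi-tube volume estimate for the switching count; the paper instead splits the inner minimum into three separate $\min_j$ terms via the inequality $\min_j(a_j+b_j+c_j)\le 3\max\{\min_j a_j,\min_j b_j,\min_j c_j\}$ and simply asserts that at most $2\varepsilon K^2 B$ points are reassigned without the geometric justification you provide.
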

\begin{proof}
We first note that, by triangle inequality, we have 
\begin{equation*}
\begin{split}
&d_{\max} (M_S,M_K)\\ =&\max_{1\leq i\leq K} \min_{1 \leq j \leq K} \|m_{S,i}-m_{T,j}\|_2 + \|m_{S,j}-m_{T,i}\|_2  \\
\leq & \max_{1\leq i\leq K} \min_{1 \leq j \leq K} \|w_{S,i}-w_{T,j}\|_2 + \|w_{S,j}-w_{T,i}\|_2 +\|s_{S,i}-s_{T,j}\|_2 \\
+&\|s_{S,j}-s_{T,i}\|_2 +\|a_{S,i}-a_{T,j}\|_2 + \|a_{S,j}-a_{T,i}\|_2 
\end{split}
\end{equation*}
Note that, by $\min_j\{a_j+b_j+c_j\}\leq \max \{3 \min_j a_j, 3 \min_j b_j, 3 \min_j c_j\}$, the inner minimization is bounded by 3 times the maximum of  $\min_{1 \leq j \leq K} \|w_{S,i}-w_{T,j}\|_2 + \|w_{S,j}-w_{T,i}\|_2$, $\min_{1 \leq j \leq K} \|s_{S,i}-s_{T,j}\|_2 + \|s_{S,j}-s_{T,i}\|_2$, $\min_{1 \leq j \leq K} \|a_{S,i}-a_{T,j}\|_2 + \|a_{S,j}-a_{T,i}\|_2$.
Now let us consider those terms separately:
\begin{enumerate}
    \item $\min_j \|w_{S,i}-w_{T,j}\|_2 + \|w_{S,j}-w_{T,i}\|_2 $: By Lipschitz continuity, the distance between two sentence vectors can be bounded by the distance between their corresponding cluster centers.
More precisely, 
$$\|w_{S,i}-w_{T,j}\|_2 + \|w_{S,j}-w_{T,i}\|_2 \leq \beta \|c_{S,i}-c_{T,j}\|_2 + \beta \|c_{S,j}-c_{T,i}\|_2$$ and thus
$$\min_j \|w_{S,i}-w_{T,j}\|_2 + \|w_{S,j}-w_{T,i}\|_2 \leq \beta \min_j \|c_{S,i}-c_{T,j}\|_2 +  \|c_{S,j}-c_{T,i}\|_2$$
By the assumption, the right hand side is bounded by $\varepsilon$, and thus 
$$\min_j \|w_{S,i}-w_{T,j}\|_2 + \|w_{S,j}-w_{T,i}\|_2 \leq \beta \varepsilon$$

\item $\min_j \|s_{S,i}-s_{T,j}\|_2 + \|s_{S,j}-s_{T,i}\|_2 $:
By the assumption, we know that, for any given $i$, we can find $j$, such that $\|c_{S,i}-c_{T,j}\| + \|c_{S,j}-c_{T,i}\|\leq \varepsilon$. 
That is to say, the cluster centers' distance is at most $\epsilon$.
Since the distribution space is bounded by $B$, there are at most $\varepsilon$, there are at most $2 \epsilon B$ data points are clustered differently. 
As there are $K$ clusters, in total at most $2 \epsilon K^2 B$ data points are clustered differently. 
This gives a natural upper bound 
$$\min_j \|s_{S,i}-s_{T,j}\|_2 + \|s_{S,j}-s_{T,i}\|_2 \leq 6 \varepsilon K^2 B$$

\item $\min \|a_{S,i}-a_{T,j}\|_2 + \|a_{S,j}-a_{T,i}\|_2 $: Now applying a similar argument in 2, we know that 
in total at most $2 \epsilon K^2 B$ data points are clustered differently. 
Thus, at most 
$2 \epsilon K^2 B$ data points affect the accuracy value. This means
$$\min_j \|a_{S,i}-a_{T,j}\|_2 + \|a_{S,j}-a_{T,i}\|_2 \leq 6 \varepsilon K^2 B$$

\end{enumerate}
Combining those results, we can conclude that 
\begin{equation*}
\begin{split}
& \min_{1 \leq j \leq K} \|w_{S,i}-w_{T,j}\|_2 + \|w_{S,j}-w_{T,i}\|_2 +\|s_{S,i}-s_{T,j}\|_2 \\
+&\|s_{S,j}-s_{T,i}\|_2 +\|a_{S,i}-a_{T,j}\|_2 + \|a_{S,j}-a_{T,i}\|_2 \\
\leq & 3 \max \{6 \varepsilon K^2B,\beta \varepsilon\}
\end{split}
\end{equation*}
This is independent of $i$, and thus we can take the maximum over $i$, which gives 
\begin{equation*}
\begin{split}
& \max_i \min_{1 \leq j \leq K} \|w_{S,i}-w_{T,j}\|_2 + \|w_{S,j}-w_{T,i}\|_2 +\|s_{S,i}-s_{T,j}\|_2 \\
+&\|s_{S,j}-s_{T,i}\|_2 +\|a_{S,i}-a_{T,j}\|_2 + \|a_{S,j}-a_{T,i}\|_2 \\
\leq & 3 \max \{6 \varepsilon K^2B,\beta \varepsilon\}
\end{split}
\end{equation*}
That is,
\begin{equation*}
\begin{split}
d_{\max}(M_S,M_T) 
\leq & 3 \max \{6 \varepsilon K^2B,\beta \varepsilon\}
\end{split}
\end{equation*}
which completes the proof.
\end{proof}

Combining the above two lemmas directly proves the robustness statement.
\end{proof}

\section{Appendix: More Examples}
\begin{table*}[ht]
\scriptsize
\begin{tabular}{@{}p{0.15\textwidth}p{0.85\textwidth}@{}}
\toprule
\textbf{Group label}&\textbf{Group content sample}\\
\midrule
\multicolumn{2}{c}{Amazon}\\
\midrule
 Customer reviews for a product that has been discontinued& Another reviewer recently advised that this is the model to look for. I was just advised at a well known retailer that this model has been discontinued. Is this true or is this a classic bait-and-switch technique? Their current weekly sales circular features this model at a sale price. When you get to the store, they don't have it but when they look it up in their computer, it shows up as "Discontinued". It is difficult to relate reviews to actual products when the reviews you base your buying decision on could be about(a)different model(s) from the one you actually buy online or in-store. The Creative Labs' own website does not give model numbers so they are adding to the confusion. \\
\\
&The software mentioned on my May 16th review IS called "AVID Xpress" -- not "AVID Express" -- when my review was edited someone changed the spelling, possibly thinking it was a typo/mistake?\\
\\
&Although this show is very fascinating I find every episode to be almost the same. Starting with Morgan Freeman stating "when I was a young boy..." then something he did to get in trouble, or something he witnessed that ruined his fragile eggshell mind. Followed by rhetorical questions and theories, and tons and tons of examples. The examples even have examples. Maybe I just understand this stuff and the show really dumbs it down, but I feel like I wasted money investing in season 3. Which by the way, although not currently available, (I don't have cable and I still had the privilege of watching this before the DVD came out) but I will still probably end up buying it on DVD which is cheaper than I already paid for the electronic proprietary/DRM version on Amazon Unbox\\
\\
Unreliable book reviews&I do not intend to review content here. This new edition is so full of typographical errors that sometimes the reader will have to intuit what the author really wrote. It is clear that the proofreaders of this edition were not actually reading; they were simply following the little red lines under the "misspelled" words. This has resulted in some truly bizarre apparent statements by the author, unreproducible here due to copyright laws. Disclaimer-- I have not purchased this book, merely checked it out of the library.\\
\\
&It's been several years since I've read "Silent Spring," one of the most significant environmental books ever written, but I must respond to the posting by "seem," which is titled "murderous, over the top propaganda" (I correctly your misspelling of the last word): His recommendation to read "DDT: A Case Study in Scientific Fraud" was put out by the Heartland Institute and is, in itself, a "fraud." The Heartland Institute is one of the most pro-chemical, pro-industry, anti-environmental and right-wing organizations around. Nothing they put out should be believed for a second.\\
\\
&Shame on all the booksellers selling this ten dollar book for \$75 and up!Devorss is re-publishing this book in August!I took note of the sellers AND WILL NEVER BUY FROM THEM!\\
\\
\midrule
\multicolumn{2}{c}{Yelp}\\
\midrule
 Terrible dry cleaners in Phoenix& I went here for the first time on First Fridays, yeah so what. I promise that I won't hang out here all the time and ruin it for all you true Bikini lovers. My mini pitcher was \$3.50 and then 5 minutes later a chick walked up and got charged \$6.00 for two mini pitchers, hmmm, male discrimination or they can't do simple math? I'll only go back when it's 110 outside and want to put a buzz on early in the afternoon.\\
 \\
 &Mediocre dry cleaning. I want to like this business..why? 1. I like to support Yelp advertisers 2.prime location!!! It is literally around the corner from me and I will probably still go there once in awhile out of convenience. Once or twice I called rushing to get there before they closed and they waited a minute over closing time which was very nice of them. However, this review is simply based off of satisfaction with my clothing. Almost every time I have come there I have to ask to redo my shirts. It drive me nuts because the employees are nice about it. When I woke up today and had 50 dollars worth of clothing needed to be dry cleaned I drove 20 minutes to my old favorite cleaners in Arcadia. I knew that I trust them with my clothes and after years, never had to deal with such an inconvenience. I'm sorry but had to only give 3 stars. I might be back one more time.. only when I have to. John I read your message and appreciate that so I updated my review out of appreciation towards your response. I want to come back because it is convenient. Thanks for caring  \\
 \\
 & This place is tiny and has more high-end expensive beads than other stores in Phoenix. I've found some really special items here. You shouldn't expect to buy more than a few strands at a time, as it just isn't affordable. Go somewhere else for quantity, and just get a few things to spice up your mix from Bead World. \\
 \\
 \midrule
\multicolumn{2}{c}{IMDB}\\
\midrule
 Terrible movies&You have to be awfully patient to sit through a film with one-liners so flat and unfunny that you wonder what all the fuss was about when WHISTLING IN THE DARK opened to such an enthusiastic greeting from audiences in the 1940s.<br /><br />On top of some weak one-liners and ordinary sight gags, the plot is as far-fetched as the tales The Fox (Red Skelton) tells his radio audience. You have to wonder why anyone would think he could come up with a real-life solution on how to commit the perfect crime and get away with it. But then, that's how unrealistic the comedy is.<br /><br />But--if you're a true Red Skelton fan and enjoy a look back at how comedies were made in the '40s--you can at least enjoy the amiable cast supporting him. Ann Rutherford and Virginia Grey do nicely as his love interest and Conrad Veidt, as always, makes an interesting villain. One of his more amusing moments is his reaction to Skelton explaining the mysteries of wearing turbans. "I never knew that," he muses, impressed by a minor point that is cleverly introduced.<br /><br />All in all, typical nonsense that requires you to accept the lack of credibility and just accept the gags as they are. Not always easy for a discriminating viewer as many of them simply fall flat, the way many comedies of this era do because the novelty of the sight gags and one-liners has simply worn off.\\
 \\
 &If they gave out awards for the most depraved and messed-up movies in the world, Japanese cinema would clean up: their exploitation cinema wipes the floor with most other contenders, the most extreme examples being absolutely jaw-dropping exercises in bad taste, nauseating gore, freakish weirdness, and misogynistic sex.<br /><br />Guts of a Beauty is a prime example of such whacked out filth, offering discerning viewers just over an hour of full-on debauchery and gratuitous violence topped off with some very insane J-splatter goodness.<br /><br />The film opens with a young woman named Yoshimi, whose search for her missing sister has led her into the hands of some nasty yakuza, who proceed to rape her and shoot her full of strong dope called Angel Rain[...]\\
 \\
 European Union movie is disappointing and full of clichés& **SPOILERS AHEAD**<br /><br />It is really unfortunate that a movie so well produced turns out to be<br /><br />such a disappointment. I thought this was full of (silly) clichés and<br /><br />that it basically tried to hard. <br /><br />To the (American) guys out there: how many of you spend your<br /><br />time jumping on your girlfriend's bed and making monkey<br /><br />sounds? To the (married) girls: how many of you have suddenly<br /><br />gone from prudes to nymphos overnight--but not with your<br /><br />husband? To the French: would you really ask about someone<br /><br />being "à la fac" when you know they don't speak French? Wouldn't<br /><br />you use a more common word like "université"? <br /><br />I lived in France for a while and I sort of do know and understand[...] \\ 
 \\
 &Obviously made on the cheap to capitalize on the notorious "Mandingo," this crassly pandering hunk of blithely rancid Italian sexploitation junk really pours on the sordid stuff with a commendable lack of taste and restraint: The evil arrogant white family who own and operate a lavish slave plantation spend a majority of the screen time engaging in hanky panky both each other and their various slaves[...]\\
 \\
 \bottomrule
\end{tabular}%
\caption{Mislabeled candidate examples for the three sentiment classfication datasets. All the examples have GT as positive. Examples ending in [..] have been truncated to save space.}
\label{appendixA}
\end{table*}

\begin{table*}[ht]
\scriptsize
\begin{tabular}{@{}p{0.10\textwidth}p{0.80\textwidth}p{0.05\textwidth}p{0.05\textwidth}@{}}
\toprule
\textbf{Group label}&\textbf{Content}&\textbf{Label}&\textbf{Pred}\\
\midrule
Reviews of mystery movies&Based on a Stephen King novel, NEEDFUL THINGS provides the intrigue and eeriness to keep you in your seat. A mysterious man(Max von Sydow) comes to town and soon becomes the most talked about citizen. Could it be that the devil himself has set up shop as an antique dealer in a small town in Maine? von Sydow is masterful and dynamic in this role that dominates the screen. Also starring are Ed Harris and Bonnie Bedelia. Harris is steady and Bedelia is deserving of your attention. Also in support are J.T. Walsh and Amanda Plummer. Not the best, nor the worst adaptation of King's horror on the screen.&0&1\\
\\
&Before I begin, let me get something off my chest: I'm a huge fan of John Eyres' first film PROJECT: SHADOWCHASER. The film, a B-grade cross of both THE TERMINATOR \& DIE HARD, may not be the work of a cinematic genius, but is a hugely entertaining action film that became a cult hit (\& spawned two sequels \& a spin off). Judge and Jury begins with Joseph Meeker, a convicted killer who was sent to Death Row following his capture after the so-called "Bloody Shootout" (which seems like a poor name for a killing spree. Meeker kills three people while trying to rob a convenience store), being led to the electric chair. There is an amusing scene where Meeker talks to the priest about living for sex but meeting his one true love (who was killed during the shootout), expressing his revenge for the person who killed her. Michael Silvano, a washed-up football star who spends his days watching his son Alex practicing football with his high school team (and ends up harassing his son's coach). But once executed, Meeker returns as a revenant (or as Kelly Perine calls "a hamburger without the fries")[..]&0&1\\
\\
&Let me say this about Edward D. Wood Jr. He had a passion for his work that I wish more people did have. If we all had the optimism and the commanding hope of Ed Wood, the world would probably be a much better place. Being familiar with Ed Wood's story and having seen the most wonderful biopic "Ed Wood" (1994) several times, I admire his boldness and his strives for the job he loved; I still admire his never-say-die attitude. He had a love for directing that I wish more people in modern-day Hollywood had.But that doesn't make his movies any more fun to watch. And "Glen or Glenda," his first and most confessional film, is probably his very worst."Glen or Glenda" is a deadening cult movie about a cross-dresser named Glen (played by director/writer Ed Wood himself) who despite his love for his fiancée Barbara (Dolores Fuller), cannot seem to conquer his lust for transvestitism, in which he dresses in women's clothing and a wig and thus becomes...Glenda! Glen/Glenda's story is narrated by a doctor and he too is talked and watched over by a mysterious character called "The Scientist" played by veteran horror star Bela Lugosi.[..]&0&1\\
\\
Adventure movie reviews&Just exactly HOW director John Madden come to settle with Nicolas Cage and Penelope Cruz playing the roles of an Italian Officer and a Greek Villager in an honourable story: "Captain Correli´s Mandolin", just escapes me! Witness: a wobbly, inconsistent accent by Cage amid horrendous over-acting, with Cruz -- more adequately cast as a spoiled Latino opposite Johnny Depp in "Blow" -- in basically a repeat performance under the guise of a Greek nurse... ay, it was painful. But there were saving graces.The story itself is thrilling-to-tragic, and Cage does have some (-- redeeming, this is !--) musical ability. Next, a superb performance by John Hurt (Cruz´s father, the village doctor) of Oscar Callibre, as well as by Irene Papas, each as village elders, as well as by Christian Bale (Papas´ son) among the village freedom fighters, go far towards counter-balancing awkward performances (especially at the beginning) by Cruz and Cage. Nicely, the last two seem to grow into their respective roles as the film progresses, but it´s teeth-gnashing early on. Finally, the scenery itself and the photography could garner a technical award, and such provides pleasant distractions when most needed.[..]&1&0\\
\\
&Daisy Movie Review By James Mudge From beyondhollywood.com. On paper, "Daisy" sounds like an Asian film fan's dream come true, directed by "Infernal Affairs" co-helmer Andrew Lau and starring everybody's favourite sassy girl, popular Korean actress Jeon Ji Hyun. Unfortunately, despite the talent involved, and the fact that the crew flew halfway around the world to shoot in Amsterdam , the film turns out to be a bit of a disappointment, being a clich'd romantic drama which wallows in misery and self importance. The plot follows Hye Young (Jeon Ji Hyun), a rather naive Korean girl who lives in Amsterdam , spending her life working in her grandfather's antique shop and doing portraits for tourists. One day, she begins receiving flowers at exactly the same time from a secret admirer, who she believes to be a mystery man from her past who once built her a nice little bridge. One day she meets Jeong Woo (Lee Seong Jae, also in "Holiday" and "Public Enemy"), who unbeknownst to her is actually an Interpol agent tracking Asian criminals in the Netherlands .With Hye Young assuming that Jeong Woo is responsible for the flowers, the two fall very slowly into a chaste romantic relationship. However, it turns out that the man sending the flowers is actually Park Yi (Jung Woo Sung, from "Sad Movie" and "Musa"), an assassin working for a Chinese crime syndicate. Inevitably, the love triangle turns tragic and the two men end up facing off while poor Hye Young tries to work out which of the two is the love of her life.Although "Daisy" is ostensibly a love story, it has the feel of a funeral, with a slow, sombre pace and a plot which piles on the misery. Half of the film's running time is taken up with scenes of the characters staring longingly out of windows into the rain, with the silence broken only by bouts of self pitying narration.[..] &1&0\\
\\

 \bottomrule
\end{tabular}%
\caption{Random sample from under-performing groups discovered by SEAL for the IMDB dataset. 0 and 1 indicates negative and positive sentiment classes respectively.}
\label{sample_Appendix}
\end{table*}


\end{document}